\title{Linear regression in $p$-adic metric spaces}
\author[G. D. Baker]{Gregory D. Baker}
\address{School of Computing, Australian National University, 108 North Road Acton, ACT 2601 Australia}
\email{greg.baker@anu.edu.au}
\author[S. McCallum]{Scott McCallum}
\address{School of Computing, Macquarie University, Macquarie Park NSW 2109 Australia}
\email{scott.mccallum@mq.edu.au}
\author[D. A. Pattinson]{Dirk Pattinson}
\address{School of Computing, Australian National University, 108 North Road Acton, ACT 2601 Australia}
\email{dirk.pattinson@anu.edu.au}
\theoremstyle{plain}
\newtheorem{thm}{Theorem}
\newtheorem{corollary}[thm]{Corollary}
\newtheorem{remark}[thm]{Remark}
\newtheorem{notn}{Notation}
\newtheorem{problem}{Problem}
\newcommand{\Q}{\mathbb{Q}}
\newcommand{\mycite}[1]{\cite{#1}}
\definecolor{bluegray}{rgb}{0.4, 0.6, 0.8}
\begin{document}

\begin{abstract}

Many real-world machine learning problems involve inherently hierarchical data, 
yet traditional approaches rely on Euclidean metrics that fail to capture the 
discrete, branching nature of hierarchical relationships. We present a 
theoretical foundation for machine learning in $p$-adic metric spaces, which 
naturally respect hierarchical structure. Our main result proves that an 
$n$-dimensional plane minimizing the $p$-adic sum of distances to points in a dataset 
must pass through at least $n+1$ of those points --- a striking contrast to Euclidean 
regression that highlights how $p$-adic metrics better align with the discrete 
nature of hierarchical data.

  As
  a corollary, a polynomial of degree $n$ constructed to minimise the
  $p$-adic sum of residuals will pass through at least $n+1$
  points. As a further corollary, a polynomial of degree $n$
  approximating a higher degree polynomial at a finite number of
  points will yield a difference polynomial that has distinct rational
  roots.

We demonstrate the practical significance of this result through two applications
in natural language processing: analyzing hierarchical taxonomies and modeling
grammatical morphology.  These results suggest that $p$-adic
metrics may be fundamental to properly handling hierarchical data structures in 
machine learning. In hierarchical data, interpolation between points often makes less sense
than selecting actual observed points as representatives.

\end{abstract}
\keywords{machine learning, $p$-adic geometry, grammatical morphology}
\subjclass[2010]{11D88,62J99,68T50}
\maketitle

\section{Introduction}

Machine learning has overwhelmingly relied on Euclidean metrics, implicitly 
assuming that data exists in a continuous space where small changes yield 
proportionally small differences. Yet many real-world problems - from biological 
taxonomies to grammatical structures - are inherently hierarchical, where 
similarity is better measured by proximity in a tree rather than distance in 
a continuous space.

This fundamental mismatch between Euclidean metrics and hierarchical data 
has profound implications. When analyzing hierarchical structures, two points 
that appear close in Euclidean space may be very distant in terms of their 
relationship within the hierarchy, and vice versa. Consider biological 
classification: a whale and a fish may appear similar in many measurable 
dimensions, yet are vastly different in their taxonomic relationship.

Euclidean thinking permeates machine learning, though.
One of the more important tasks in the formulation of a machine learning problem is
finding an appropriate loss function to minimise. Typically we do this
by embedding the data into a Euclidean space and using a loss function
that is implicitly Euclidean.

Some examples of explicitly and implicitly Euclidean loss functions:

\begin{description}
    \item[Explicit] The ${L^2}$ norm --- the loss function is a residual sum of 
squares of the differences between a predicted value and a ground truth value. 
\item[Implicit] The loss function is a cross-entropy loss for a prediction.
\end{description}

Many other loss functions --- the ${L^1}$ norm,
the Manhattan distance --- can be approximated with an Euclidean distance.

When would Euclidean distance {\it not} be an appropriate loss function?

\begin{itemize}
    \item When the problem is predicting a position on a
hierarchical tree, the loss function will have to reflect the distance
away from the correct position in the tree. For example, the distance 
between two points could be the depth of their nearest common ancestor. 
\item When the problem is trying to
predict a polynomial, an appropriate loss function may be the degree
of the residual. For example, if the correct answer is $x^2$, then $x^2+1$ is
likely to be a better answer than $x$, even though the former
polynomial has no overlap with the target and the latter polynomial
intersects it.
\end{itemize}

Why could these not be turned into problems with a Euclidean loss?

Asking whether a problem could be represented accurately with a Euclidean loss
function is asking whether an isometry exists between the relevant distance
metric (common ancestor depth, or polynomial degree) and the Euclidean metric.
It is a fundamental result in topology, dating back to Hausdorff's formalisation of topological spaces \mycite{hausdorff1914grundzuge}, that invariants such as connectedness, compactness, and dimension are preserved under homeomorphisms. Since isometries induce homeomorphisms in metric spaces, they necessarily preserve these topological properties.
The two examples given are totally disconnected spaces (as proven in Problem 63 in \mycite{gouvea_padic}, for example), unlike Euclidean spaces where any two points can be connected with a continuous path.

\subsection{Structure of the paper}

We focus on the $p$-adic metric in this paper as it is an example of a
highly non-Euclidean metric, and explore its implications for
machine learning. We show that intuitions from Euclidean
linear regression are unhelpful for $p$-adic linear regression, and
then prove a useful foundational theorem about $p$-adic linear
regression --- which would be false in a Euclidean space --- to illuminate some of the strangeness of $p$-adic machine
learning. 
Having proven the theorem, we use this to create an algorithm for
solving $p$-adic linear regression problems. We make some attempts at
optimisation, but observe that there is scope for further research to improve
its efficiency.
We then 
 explore two case studies (both involving language processing tasks where language
or grammar is modelled hierarchically) to show that $p$-adic
linear regression can be used to solve some problems in unusual
and interesting ways.
We conclude with a section of unsolved and open problems
that arose in the writing of this paper.


This paper was inspired by a question posed  by Igor Shparlinski \mycite{shparlinski2024private}, who asked whether multivariate $p$-adic regression can be
solved similarly to its one-dimensional counterpart \mycite{aaclpadiclinear}. We provide a
positive answer, with rigorous proofs showing that an optimal plane in
$p$-adic space must pass through at least $n+1$ points in a dataset.
A search of the literature turns up
no other related work on $p$-adic linear regression. However, research has been done on other machine learning algorithms where distance
is measured $p$-adically, such as: Murtagh \mycite{murtagh-model}, mainly looking at nearest neighbour methods; and Khrennikov
\mycite{khrennikov-neural} 
on neural networks.



\section{A brief overview of $p$-adic numbers and $p$-adic spaces}

%


Kurt Hensel (in the late 19th century) observed that there is an
unusual family of distance functions that have useful properties.

Given a prime number \( p \) and a non-zero rational number \( x = \frac{a}{b} \) where \( a \) and \( b \) are integers, the \( p \)-{\em adic} {\em valuation} \( v_p(x) \) is defined as the highest power of \( p \) that divides \( a \) minus the highest power that divides \( b \). This can be positive, zero or negative. The \( p \)-{\em adic} {\em absolute value} \( |x|_p \) is then given by:

\[ |x|_p = p^{-v_p(x)} \]

For \( x = 0 \), we define \( |x|_p = 0 \).
For $x$ and $y$ both rational, the $p$-{\em adic} {\em distance} $d(x,y)$ between $x$ and $y$ is then $|x - y|_p$; and the function $d$ of $x$ and $y$ thereby determined is also called the $p$-{\em adic} {\em metric}.

According to the $p$-adic notion of distance,
two rational numbers are close together if
their difference is highly (and positively) divisible by the prime $p$. 3-adically, 1 and 4 are close
together. 1 and 10 are very close together, because their difference can be divided by
3, and then divided by 3 yet again. 1 and 28 are closer still (3-adically). However, $\frac{2}{3^{10}}$ and $\frac{1}{3^{10}}$ are not close, 3-adically.

A little arithmetic and algebra can convince the reader that the following properties of the $p$-adic absolute value hold for all prime numbers $p$:

\begin{description}
\item[Non-negativity] $|x|_p \ge 0$
\item[Positive definiteness] $|x|_p = 0$ if and only if $x = 0$
\item[Multiplicativity] \( |x y|_p = |x|_p |y|_p \)
\item[Triangle inequality] \( |x + y|_p \le |x|_p + |y|_p \)
\end{description}

Analogues of the above hold for the familiar absolute value on the reals ($\mathbb{R}$).

Ostrowski proved \mycite{ostrowski1916} that every non-trivial absolute value over
the rationals --- that is, a function for which the above four properties hold --- is either a positive power of the standard Euclidean absolute value, or a positive power of the $p$-adic absolute value.

It follows from the above that $\mathbb{Q}$, together with the $p$-adic distance function $d$, constitutes a {\em metric space}.
(This formally justifies the terminology ``$p$-adic metric" mentioned above.)

The $p$-adic absolute value (or metric) actually has a slightly stronger version of the triangle inequality (aptly called ``the strong triangle inequality''):
\begin{equation}
 |x + y|_p \le \max(|x|_p ,|y|_p)
  \label{strongtriangle}
\end{equation}

It follows that $(\mathbb{Q}, d)$ moreover constitutes an {\em ultrametric space}, with {\em ultrametric} $d$.

There are some unfamiliar aspects of the $p$-adic absolute value (and metric). 
Famously, every point inside a circle
is a centre of that circle.

Another example:
it is not possible to get from $1$ to
$2$ in small steps. $1+p$ is close to 1, but it is neither closer nor further from $2$ than
$1$ was. $\frac{3}{2}$ is not half-way between them: 2-adically, 
$\frac{3}{2}$ is further
from $1$ and $2$ than they are from each other. 





\section{Multivariate $p$-adic linear regression}\label{least-squares-comparison}





There is a small discrepancy in naming conventions between machine learning
and linear algebra. A linear regression problem in machine learning (and in
statistics) is a search for an {\it affine} function, not a linear function.
We may therefore state our multivariate $p$-adic linear regression
problem as follows:

\begin{problem} \label{prob:lin-reg}
Given $X_1, \ldots, X_k \in \mathbb{Q}^n$ and $y_1, \ldots, y_k \in \mathbb{Q}$, 
find an affine function $F:
\Q^n \to \Q$ that minimises a loss function defined by $\sum_{i=1}^k |F(X_i) -
y_i|_p$.
\end{problem}

Note that we could generalise this problem to cover any regression
problem over any field which has an ultrametric valuation. However,
for concreteness 
we shall refrain from such generalisation and
work with the $p$-adic
numbers. The proof of our main theorem (Theorem \ref{core-theorem} in Section \ref{core-theorem-section})  follows  
{\it mutatis mutandis} 
for other ultrametric
valuations.


  \begin{notn}
  We identify vectors in $\Q^n$ with $1 \times n$ matrices, i.e.
  we  conceive of vectors as row vectors. Given $X = (x_1, \dots,
  x_n)$ and $Y = (y_1, \dots, y_n) \in \Q^n$, $X \cdot Y = \sum_{1
  \leq i\leq n} x_i y_i$ denotes the dot product of $X$ and $Y$. 
  We use standard notation
  for matrices, and write $(\cdot)^T$ for the matrix transpose.
  If $X = (x_1, \dots, x_{n}) \in \Q^n$, we let $(X, 1) =
  (x_1, \dots, x_{n}, 1)$. 
  \end{notn}

  Given that linear functions can be represented by matrix operations, our
  problem can now be reformulated as follows: 

  \begin{problem}
  Given $X_1, \dots, X_k \in \Q^n$ and $y_1, \dots, y_k \in \Q$,
find a vector $V \in \Q^{n+1}$ that minimises
$\sum_{i=1}^k |V \cdot (X_i, 1) - y_i |_p$.
\label{formalnotation}
  \end{problem}




We note that $p$-adic regression shares a formulation that is similar
to ordinary least squares regression. Ordinary least squares can be
solved in closed form analytically by taking the derivative of the
cost function and finding the sole zero of the derivative.  This is
possible because the cost function is convex and has a single global
minimum.

Unfortunately, $p$-adic linear regression is not as simple, as discussed in the next subsections.


\subsection{No guarantee of a global minimum}

 The loss function of a $p$-adic linear regression problem does not always have a single
  global minimum.
There can be multiple global minima even in the lowest-dimensional
dataset with small numbers of points.

Consider the following
dataset 
where there are four equally good lines of best fit
2-adically:
\begin{equation*}\label{padic-four-solution}
  \begin{matrix}
    (0, 0) &  (1, 0) &
    (1, 1) &  (1, 2) &
    (1, 3)
  \end{matrix}
\end{equation*}

The 2-adic sum of distances from those points is $\frac{5}{2}$
for all of the following lines:
$y = 0$,  $y = x_1$,  $y = 2x_1$ and $y = 3x_1$.

By the theorem in \mycite{aaclpadiclinear}, the optimal line must pass through
at least two points in the dataset.
  
  Enumerating all six of the other possible lines that pass through
  two points in the dataset finds no lines with a lower loss than $\frac{5}{2}$.

\subsection{Structure and repetition of good solutions}

Consider
 $X_{i} = y_i = i - 1$ for $i \in \{1, 2 \ldots 5\}$.  Or equivalently,
 the set of $(X_{i}, y_i)$ pairs:
\begin{equation*}\label{trivial}
  \begin{matrix}
    (0, 0) &  (1, 1) &
    (2, 2) &  (3, 3) &
    (4, 4)
  \end{matrix}
\end{equation*}
where the $X$ and $y$ values are identical.
 This obviously
has a line of best fit $y = x$, with residual sum equal to zero.
If we are minimising the 3-adic distance, then 
 $y = x + 1$ has a residual sum of 5 and the lines
 $y = 2x$, $y = 3x$ and $y = 5x$ all have a 
residual sum of $\frac{10}{3}$ --- clearly worse lines than $y = x$.

But note that $y = x + 3$ has a quite small residual sum of $\frac{5}{3}$.
The line $y = 4x$ is quite small too, with a residual sum of $\frac{10}{9}$.
These are obviously quite good, and in a moment we can show that they are local minima.

$y = 10x$ is better still (because $10x = 9x + 1x$ and $9 = 3^2$) with a residual
sum of $\frac{10}{27}$. 

The pattern is that $y = (p^t + 1) x$ is a very good line for all
$t \in \mathbb{Z}^+$. The line $y = (3^{1000000}+1)x$ is very nearly
as good a line of best fit as $y = x$ is.

 Starting with a global minimum,
we can find a local minimum by adding any integer multiple of $p$
to any coefficient in the linear equation. We can find a very good (nearly
globally optimal) local minimum by adding an integer multiple of $p^t$ where
$t$ is very large.

Thus, there are an infinite number of local minima.
The implication is that a random starting point has an absurdly
high probability of landing near a local minimum rather than a global
one.


\subsection{Gradient descent is not viable}

Machine learning algorithms that use $\mathbb{R}$ instead of
$\mathbb{Q}_p$ often use gradient descent to find solutions where the
loss landscape may contain multiple global minima and many local
minima, so it is reasonable to ask if it could be applied to $p$-adic
machine learning. Unfortunately it is not, since a loss function
constructed using a $p$-adic norm is locally constant almost
everywhere.

Using the notation of Problem \ref{formalnotation}, let $V, V' \in \mathbb{Q}^{n+1}$
be ``close'' in the sense that if we define

\begin{equation}
  \epsilon_i = V \cdot (X_i,1) - V' \cdot (X_i,1)
  \label{epsilon-definition}
  \end{equation}

then 

\begin{equation}
  \left| \epsilon_i \right|_p \leq \left| V \cdot (X_i,1) - y_i \right|_p
  \label{epsilon-setup}
\end{equation}

The difference in the loss function for $V$ and $V'$ is

\begin{equation*}
  \begin{aligned}
    \sum_i^k & \left| V' \cdot (X_i,1) - y_i \right|_p -  \sum_i^k \left| V \cdot (X_i,1) - y_i \right|_p \\
    & = \sum_i^k \left| V \cdot (X_i,1) - y_i - \epsilon_i \right|_p -  \sum_i^k \left| V \cdot (X_i,1) - y_i \right|_p  &\quad\text{(from Equation (\ref{epsilon-definition}))} \\
    & = \sum_i^k \max(\left| V \cdot (X_i,1) - y_i \right|_p, \left| \epsilon_i \right|_p) -  \sum_i^k \left| V \cdot (X_i,1) - y_i \right|_p &\quad\text{(ultrametricity)} \\
    & =  \sum_i^k \left| V \cdot (X_i,1) - y_i \right|_p -  \sum_i^k \left| V \cdot (X_i,1) - y_i \right|_p  &\quad\text{(using Equation (\ref{epsilon-setup}))} \\
    & = 0
  \end{aligned}
\end{equation*}

Thus, if the best model at the moment is $V$, there is no ``small
update'' that could be made in any direction where it would be
possible to see an improvement in the loss function.

Because every $p$-adic ball is a plateau of the loss surface, the
gradient (indeed any signal based on a first-order derivative)
vanishes everywhere. Gradient-based optimisers therefore have nothing
to latch onto: the only way to improve a model is to make
discrete jumps that leave the current ball entirely.

\section{Hyperplane Intersection Theorem}

In this section we will show that an affine
function of $n$ variables that minimises a $p$-adic loss function
will pass through at least $n+1$ points in the dataset (assuming the dataset
has at least $n+1$ points and spans $n$ dimensions). This theorem is the key that allows
us to create an algorithm for solving $p$-adic linear regression problems, and
to reason about such problems.



\label{core-theorem-section}
\begin{thm}
  \label{core-theorem}
  Let $n, k \in \mathbb{Z}^+$ where $k \ge n+1$.
  Let $X_1, X_2, \ldots X_k \in \mathbb{Q}^n$ and
  $y_1, y_2, \ldots y_k \in \mathbb{Q}$, where
$y_i \ne y_j \implies X_i \ne X_j$.
Suppose that the data set $X_1, \ldots, X_k$ is non-degenerate, that is, 
there is no non-zero affine function $\phi : \mathbb{Q}^n \rightarrow \mathbb{Q}$ for which $\phi(X_i) = 0$ for all $i$. Then there is
an affine function $M~:~\mathbb{Q}^n \rightarrow \mathbb{Q}$ which minimises $\sum_{i=1}^{k} \left|M(X_i) - y_i\right|_p$, and
$M(X_i) - y_i = 0$ for at least $n+1$ distinct values of $i$.
Moreover, all such optimal affine functions have the latter property.

\end{thm}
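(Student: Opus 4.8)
The plan is to work with the reformulation of Problem~\ref{formalnotation}: an affine $M \colon \mathbb{Q}^n \to \mathbb{Q}$ is the same datum as a vector $V \in \mathbb{Q}^{n+1}$ via $M(X) = V \cdot (X,1)$, and, writing $r_i(V) = V \cdot (X_i,1) - y_i$ for the residuals, we must minimise $L(V) = \sum_{i=1}^{k} |r_i(V)|_p$. First I would unwind the non-degeneracy hypothesis: a non-zero affine function vanishing on every $X_i$ is precisely a non-zero $W \in \mathbb{Q}^{n+1}$ orthogonal (for the dot product) to every $(X_i,1)$, so non-degeneracy says exactly that $(X_1,1), \dots, (X_k,1)$ span $\mathbb{Q}^{n+1}$. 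In particular $k \ge n+1$, and for every $(n+1)$-element index set $T$ with $\{(X_i,1) : i \in T\}$ a basis, the system $V \cdot (X_i,1) = y_i$ $(i \in T)$ has a unique rational solution $V_T$; let $\mathcal{C}$ be the finite, non-empty collection of these $V_T$. These are exactly the affine functions passing through an affinely independent $(n+1)$-subset of the data.

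The crux is an exchange lemma: if $Z = \{i : r_i(V) = 0\}$ and $\{(X_i,1) : i \in Z\}$ fails to span $\mathbb{Q}^{n+1}$, then there is $V'$ with $\{i : r_i(V') = 0\} \supsetneq Z$ and $L(V') < L(V)$. To build $V'$, choose a non-zero rational $W$ orthogonal to $(X_i,1)$ for all $i \in Z$ (possible, as the orthogonal complement of a proper rational subspace is non-trivial); by non-degeneracy $W$ is not orthogonal to every $(X_i,1)$, so the ``active set'' $A = \{ i \notin Z : W \cdot (X_i,1) \neq 0 \}$ is non-empty. Along the line $V_t = V + tW$ the residuals for $i \in Z$ remain $0$ and those for $i \notin Z \cup A$ are unchanged, while for $i \in A$ we have $r_i(V_t) = (W \cdot (X_i,1))(t - s_i)$ with $s_i = -r_i(V)/(W \cdot (X_i,1)) \neq 0$. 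Now take $j \in A$ minimising $|s_j|_p$ and set $V' = V_{s_j}$: then $r_j(V') = 0$ is a genuinely new zero ($j \notin Z$), and for every other $i \in A$ the strong triangle inequality~\eqref{strongtriangle} gives $|s_j - s_i|_p \le \max(|s_j|_p, |s_i|_p) = |s_i|_p$, so $|r_i(V')|_p \le |r_i(V)|_p$. Summing, $L(V') \le L(V) - |r_j(V)|_p < L(V)$ since $|r_j(V)|_p > 0$. Finding this step size --- the unique place where one residual is annihilated while ultrametricity keeps all the others from growing --- is the step I expect to be the main obstacle; everything else is bookkeeping.

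Finally I would iterate the lemma. Starting from any $V$, as long as the zero-set does not span I apply the lemma, which strictly enlarges the zero-set (hence the process stops after at most $k$ steps) without ever increasing $L$; the terminal $\tilde{V}$ has a spanning zero-set, so it coincides with some $V_T \in \mathcal{C}$ and satisfies $L(\tilde{V}) \le L(V)$. Consequently $\min_{V \in \mathcal{C}} L(V)$, being a minimum over a finite set, is attained and equals $\inf_{V \in \mathbb{Q}^{n+1}} L(V)$; the minimiser is one of the $V_T$, so the corresponding affine function passes through $n+1$ affinely independent data points. For the closing assertion, if an optimal $M$ had a zero-set failing to span, the lemma would hand us a strictly better function, contradicting optimality; hence every optimal $M$ has a spanning zero-set, which necessarily contains at least $n+1$ indices. (One can alternatively prove existence by changing coordinates so that $L$ becomes coercive on $\mathbb{Q}_p^{n+1}$ --- $L(V) \to \infty$ as $\max_i |r_i(V)|_p \to \infty$ --- and invoking compactness of $p$-adic balls, after which the exchange lemma forces the minimiser to be rational; but the finite-candidate argument above keeps everything inside $\mathbb{Q}$.)
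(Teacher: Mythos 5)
Your proposal is correct and follows essentially the same route as the paper's own proof: the exchange lemma with direction $W$ orthogonal to the current zero-set, the step size $s_j$ of minimal $p$-adic absolute value combined with the strong triangle inequality to guarantee no residual grows while a new one vanishes, and the reduction to the finite candidate family of planes through $n+1$ (independent) data points is exactly the paper's argument, with your $W$, $s_i$ playing the roles of its $V'$, $\alpha_i$. The only cosmetic difference is that you phrase the iteration's stopping condition as ``the zero-set spans $\mathbb{Q}^{n+1}$'' rather than counting $m<n+1$ zeros, which if anything streamlines the bookkeeping.
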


\begin{proof}

We will sometimes use the term {\it residual} of a data point $(X_i,y_i)$ 
with respect to an affine function $F$ to mean the quantity 
$F(X_i) - y_i$.

The main part of the proof is devoted to establishing the following
key assertion:

\begin{quote}
    For every affine function $F~:~\Q^n \rightarrow \Q$, for which
    the number $m$ of values of $i$ for which $F(X_i) - y_i = 0$
    satisfies $0 \le m < n+1$, there exists an affine function
    $G~:~\Q^n \rightarrow \Q$ such that
    $\sum_{i=1}^k \left|G(X_i) - y_i \right|_p < 
     \sum_{i=1}^k \left|F(X_i) - y_i \right|_p$,
    and there are more than $m$ data points $(X_i,y_i)$ whose
    residual with respect to $G$ vanishes.
\end{quote}

  Let $F~:~\Q^n \rightarrow \Q$, with $F(X) = V \cdot (X,1)$,
  be an affine function, and suppose that the number $m$ of values of $i$
  for which $F(X_i) - y_i = 0$ satisfies $0 \le m < n+1$.
  Observe that we have flexibility in choosing the order of the
  elements $X_i$. So without loss of generality, we can assume that
  $F(X_i) - y_i = 0$ when $i \leq m$  and $F(X_i) - y_i \neq 0$
  otherwise. Equivalently $V \cdot (X_i,1) - y_i = 0$ when $i \leq m$ and
  $V \cdot (X_i,1) - y_i \neq 0$ otherwise.

  Furthermore, note that we have not yet specified the order
  of the remaining elements, as we will do so later.


  Let us create a vector $V' \in \mathbb{Q}^{n+1}$
  with the goal of making another solution $(V + V')$ which has a lower loss.
  A good place to start would be to make sure
  we don't affect the value of the function at the $m$ points that are already correct.

  Formalising that idea, we would want
  $(V + V') \cdot (X_i, 1) - y_i = 0$ when $i \leq m$.
  Since $V \cdot (X_i,1) - y_i = 0$ when $i \leq m$, this is looking for
  a $V' \neq 0$ satisfying $V' \cdot (X_i, 1) = 0$.


  This is indeed possible.
  We would be solving the simultaneous equations defined by this matrix calculation, looking for a $V' \neq 0$.
  \begin{equation}
    (V'_1, V'_2, \ldots, V'_{n+1})
    \left(\begin{array}{ccc}
            X_{1,1} & \dots & X_{m,1} \\
            \vdots &       &        \vdots \\
            X_{1,n} & \dots & X_{m,n}  \\
            1 & \dots &  1 \\
          \end{array} \right) = (0,0, \ldots, 0)
        \label{homogenous-equations}
      \end{equation}

      There are $n+1$ unknowns $V'_1 \ldots V'_{n+1}$ and $m$ homogeneous equations, meaning that not only
      is there a guarantee of a non-zero solution, there
      are going to be at least $n+1-m$ components of $V'$ that can be chosen freely.

      Choose an arbitrary non-zero $V'$ satisfying Equation (\ref{homogenous-equations}). Since we
      have assumed that the $X$-data-set is non-degenerate, there
      exists $i$, so that $m < i \leq n+1$ where $V' \cdot (X_i, 1) \neq 0$.

      Observe that if $\alpha \in \mathbb{Q}$ then $\alpha V'$ is also a solution; that is we have:

      \begin{equation}
        (V + \alpha V') \cdot (X_i, 1) - y_i = 0 \text{ when } i \leq m.
        \label{alpha-solution}
      \end{equation}

      In order to construct the desired $G$, 
      and hence to prove our key assertion, 
      we would like to select one more $(X_i, y_i)$
      pair and make it have a residual zero with respect to some function that also keeps the residual zero for the first $m$ points.

      For each $i$ in the range $m < i \leq n + 1 $ where $V' \cdot
      (X_i,1) \ne 0$, let us define $\alpha_i$ as follows

      \begin{equation}
        \alpha_i = \frac{V \cdot (X_i, 1) - y_i}{- V' \cdot (X_i, 1)}
        \label{alpha-definition}
      \end{equation}

      Observe from Equations (\ref{alpha-solution}) and 
      (\ref{alpha-definition}) that when $\alpha_i$ is defined:

      \begin{equation}
        (V + \alpha_i V') \cdot (X_j, 1) - y_j = 0 \text{ when } j \leq m \text{ or when } j = i.
      \end{equation}

      We can now decide on an ordering for the data elements $(X_i,y_i)$ from $m+1 \le i \le k$
      which we had previously left unspecified.

      Select the $\alpha_i$ with the smallest $p$-adic absolute
      value. If multiple candidates share this minimal value, break the
      tie at random --- it makes no difference to the remainder of the proof.

      Let the corresponding data element $(X_i, y_i)$ be element $m+1$.

      We observe that the remaining data elements don't need any particular
      ordering, but for convenience in the proof we will sort them a little
      further.  Let us put all the data elements for which $\alpha_i$ is 
      defined next,
      and the data elements for which $\alpha_i$ is not defined last. Let $s$ be the last data element for which $\alpha_i$ is defined.
      The following will be true:

      \begin{equation}
        m + 1 < i \le s \implies \left| \alpha_{m+1} \right|_p \leq \left| \alpha_i \right|_p
        \label{minimum-alpha}
      \end{equation}

      We can now calculate the loss for the solution $V + \alpha_{m+1} V'$.

      First,
      let us break it up into the ranges: the already-zero-residual points, $1 \le i \le m$,
      the index of the newly-chosen element $i = m+1$,
      the range $m+2 \le i \le s$ and the range $s < i \le k$:

  \begin{equation}
    \begin{split}
      \sum_{i=1}^k  \left| (V + \alpha_{m+1} V') \cdot (X_i,1) - y_i \right|_p & =  \sum_{i=1}^m \left| (V + \alpha_{m+1} V') \cdot (X_i,1) - y_i \right|_p \\
      & + \left| (V + \alpha_{m+1} V') \cdot (X_{m+1},1) - y_{m+1} \right|_p \\
      & + \sum_{i=m+2}^s \left| (V + \alpha_{m+1} V') \cdot (X_i,1) - y_i \right|_p \\
      & + \sum_{i=s+1}^k \left| (V + \alpha_{m+1} V') \cdot (X_i,1) - y_i \right|_p
    \end{split}
    \label{quadpartite}
  \end{equation}

  By construction, the first two ranges sum to zero. In particular, note that for the second ``range'' there is a strict inequality:

  \begin{equation}
     \left| (V + \alpha_{m+1} V') \cdot (X_{m+1},1) - y_{m+1} \right|_p = 0 <  \left| V \cdot (X_{m+1},1) - y_{m+1} \right|_p
    \label{strictless}
  \end{equation}

  For the third range of Equation (\ref{quadpartite}) we can use the
  strong triangle inequality to break it apart.
  
  \begin{equation}
    \begin{split}
      \sum_{i=m+2}^s & \left| (V + \alpha_{m+1} V') \cdot (X_i,1) - y_i \right|_p \\
      &  =       \sum_{i=m+2}^s \left| V \cdot (X_i,1) + \alpha_{m+1} V' \cdot (X_i,1) - y_i \right|_p \\
      &  \le \sum_{i=m+2}^s \max(\left| \alpha_{m+1} V' \cdot (X_i,1) \right|_p,  \left| V \cdot (X_i,1) - y_i \right|_p) \\
      \end{split}
      \label{sum-of-the-non-zero-tail}
    \end{equation}

    Focussing on the first term of the $\max$ for an arbitrary $i$, we can use Equation (\ref{minimum-alpha}) to put a bound on its size, and Equation (\ref{alpha-definition}) to expand and then simplify:

    \begin{equation}
      \begin{split}
        \left| \alpha_{m+1} V' \cdot (X_i,1) \right|_p & \le \left| \alpha_{i} V' \cdot (X_i,1) \right|_p \\
        & = \left| \frac{ V \cdot (X_i,1) - y_i}{- V' \cdot (X_i,1)} (V' \cdot (X_i, 1)) \right|_p \\
        & = \left| V \cdot (X_i, 1) - y_i \right|_p
      \end{split}
      \label{bounding}
    \end{equation}

    Notice that the expression on the last line of Equation (\ref{bounding}) is exactly the same as the second term of the $\max$ in 
    Equation (\ref{sum-of-the-non-zero-tail}). This lets us simplify the $\max$ considerably.

    \begin{equation}
      \begin{split}
        \sum_{i=1}^s  \left| (V + \alpha_{m+1} V') \cdot (X_i,1) - y_i \right|_p & \le \sum_{i=m+2}^k \max(\left| V \cdot (X_i, 1) - y_i \right|_p ,  \left| V \cdot (X_i,1) - y_i \right|_p) \\
        & = \sum_{i=m+2}^s \left| V \cdot (X_i, 1) - y_i \right|_p
      \end{split}
      \label{non-zero-tail-lesser}
    \end{equation}

    Finally, consider the fourth range of Equation
    (\ref{quadpartite}), where $\alpha_i$ could not be defined because $V' \cdot (X_i,1) = 0$. This equality holds:

    \begin{equation}
      \begin{split}
        \sum_{i=s+1}^k \left| (V + \alpha_{m+1} V') \cdot (X_i,1) - y_i \right|_p & =
        \sum_{i=s+1}^k \left| V \cdot (X_i,1) - y_i  + \alpha_{m+1} V' \cdot (X_i,1) \right|_p  \\
              &=  \sum_{i=s+1}^k \left| V \cdot (X_i,1) - y_i \right|_p \\
      \end{split}
      \label{sum-of-the-zero-tail}
    \end{equation}

    We can now compare the loss of the function $F$ 
    in the key assertion with the loss
    of the function specified by $(V + \alpha_{m+1} V')$. 
    We can substitute in the inequalities
    from Equations (\ref{strictless}), (\ref{non-zero-tail-lesser}) and
    (\ref{sum-of-the-zero-tail}) into Equation (\ref{quadpartite}), to obtain 
    the following inequality:

    \begin{equation}
      \begin{aligned}
        \sum_{i=1}^k & \left| (V + \alpha_{m+1} V') \cdot (X_i,1) - y_i \right|_p  \\
        & < \left| V \cdot (X_{m+1},1) - y_{m+1} \right|_p + \sum_{i=m+2}^s \left| V \cdot (X_i, 1) - y_i \right|_p + \sum_{i=s+1}^k \left| V \cdot (X_i, 1) - y_i \right|_p \\
        & =  \sum_{i=1}^k \left| V \cdot (X_i,1) - y_i \right|_p \\
      \end{aligned}
      \label{contradiction-equation}
    \end{equation}

    We put $G(X) = (V + \alpha_{m+1} V') \cdot (X,1)$.
    We have demonstrated above that $G$ has lower loss than $F$ has,
    and $G$ passes through more than $m$ points of the dataset.
    Our key assertion is proved.

    Now we can define and verify our optimal affine function $M$.
    Observe that, by the non-degeneracy assumption,
    there are at least $n+1$ distinct data points $(X_i, y_i)$; and
    the number of non-zero affine functions $H$ which pass through
    at least $n+1$ distinct data points $(X_i, y_i)$ is finite and positive,
    at most $\binom{k}{n+1}$.
    Therefore, there is an affine function $M$ which has least loss
    amongst all such functions $H$. We claim that $M$ is optimal
    amongst {\em all} affine functions.
    This claim is proved as follows.
    Let $F~:~\Q^n \rightarrow \Q$ be an arbitrary affine function,
    and denote by $m$ the number of values of $i$ for which $F(X_i) - y_i = 0$.
    If $m \ge n+1$, then the loss of $M$ is at most that of $F$,
    by definition of $M$. Suppose, on the other hand, that $0 \le m < n+1$.
    Then repeated application of our key assertion to $F$,
    in which we reset $F$ to be the newly constructed $G$ in each
    subsequent step, yields after a finite number of steps
    an affine function, say $H~:~\Q^n \rightarrow \Q$,
    whose loss is lower than that of our original $F$
    and which passes through at least $n+1$ distinct data points $(X_i, y_i)$.
    It follows that the loss of $M$ is less than that of $F$.
    Every optimal affine function must pass through at least $n+1$
    distinct data points, by the key assertion.
  \end{proof}

  Note that we did not make use of any property of the $p$-adic numbers beyond
  satisfying the Strong Triangle Inequality. Thus we observe the following remark:

  \begin{remark}
    The proof of Theorem \ref{core-theorem} generalises directly to any ultrametric field. The calculation of
    $\alpha_i = \frac{V \cdot (X_i, 1) - y_i}{- V' \cdot (X_i, 1)}$ required multiplicative
    inverses. Equation (\ref{sum-of-the-non-zero-tail}) required the Strong Triangle
    Inequality. Everything else was simple algebra over a field.
  \end{remark}

\section{Polynomial corollary}\label{poly-corollary-section}

Of perhaps more interest to number theorists, there is
a simple corollary of Theorem \ref{core-theorem}.

\begin{corollary}
\label{poly-corollary}
Let $k \ge n+1$,
let $x_1, x_2, \ldots, x_k \in \mathbb{Q}$,
with the cardinality of the set $\{x_i~|~1 \le i \le k\} $
at least $n+1$, and let $y_1, y_2, \ldots, y_k \in \mathbb{Q}$.
Suppose $y_i \neq y_j \implies x_i \neq x_j$.
Let $P(x) \in \mathbb{Q}[x]$ be a rational polynomial of degree at most $n$
that minimises
$\sum_{i=1}^k \left|P(x_i) - y_i\right|_p$ amongst all such polynomials.
Then there are at least $n+1$ values of $i$ for which $P(x_i) = y_i$.
\end{corollary}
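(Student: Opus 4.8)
The plan is to reduce Corollary~\ref{poly-corollary} to Theorem~\ref{core-theorem} via the classical moment-curve (Veronese) embedding. First I would set $X_i = (x_i, x_i^2, \dots, x_i^n) \in \mathbb{Q}^n$ for each $i$, and observe that a rational polynomial $P(x) = c_0 + c_1 x + \dots + c_n x^n$ of degree at most $n$ corresponds to the affine function $F : \mathbb{Q}^n \to \mathbb{Q}$ given by $F(z_1, \dots, z_n) = c_0 + c_1 z_1 + \dots + c_n z_n$, and that this correspondence satisfies $F(X_i) = P(x_i)$ for every $i$. Matching coefficients shows $P \mapsto F$ is a bijection between polynomials of degree at most $n$ and affine functions $\mathbb{Q}^n \to \mathbb{Q}$ (both are $(n+1)$-dimensional $\mathbb{Q}$-vector spaces), so the set of achievable residual vectors $(P(x_1)-y_1, \dots, P(x_k)-y_k)$ is exactly the set of achievable $(F(X_1)-y_1, \dots, F(X_k)-y_k)$. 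Hence minimising $\sum_{i=1}^k |P(x_i) - y_i|_p$ over polynomials of degree at most $n$ is literally the same optimisation problem as minimising $\sum_{i=1}^k |F(X_i) - y_i|_p$ over affine functions.

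Next I would check that the data $(X_i, y_i)$ satisfy the hypotheses of Theorem~\ref{core-theorem}. The bound $k \ge n+1$ is assumed directly. The implication $y_i \ne y_j \implies X_i \ne X_j$ follows since $y_i \ne y_j$ forces $x_i \ne x_j$ by hypothesis, whence the first coordinates of $X_i$ and $X_j$ differ. For non-degeneracy I need that no non-zero affine function vanishes on all the $X_i$; under the correspondence above this says that no non-zero polynomial of degree at most $n$ vanishes at every $x_i$, which holds because $\{x_i : 1 \le i \le k\}$ has at least $n+1$ distinct elements while a non-zero polynomial of degree at most $n$ has at most $n$ roots.

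With the hypotheses verified, Theorem~\ref{core-theorem} provides an optimal affine function and, crucially, asserts that \emph{every} optimal affine function $F$ satisfies $F(X_i) - y_i = 0$ for at least $n+1$ distinct values of $i$. Transporting back along the bijection, the minimising polynomial $P$ from the statement corresponds to an optimal affine function, so $P(x_i) = y_i$ for at least $n+1$ values of $i$, as claimed. I expect the only step needing genuine care to be the non-degeneracy check --- in particular the appeal to the fact that a non-zero degree-at-most-$n$ polynomial cannot have $n+1$ distinct roots --- together with making the ``same optimisation problem'' identification airtight; everything else is routine bookkeeping.
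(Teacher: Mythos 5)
Your proposal is correct and takes essentially the same route as the paper: the same moment-curve embedding $X_i = (x_i, x_i^2, \ldots, x_i^n)$, the same bijection between degree-at-most-$n$ polynomials and affine functions on $\mathbb{Q}^n$ (so the two minimisation problems coincide), and the same appeal to Theorem \ref{core-theorem} including the check that $y_i \neq y_j \implies X_i \neq X_j$. The only difference is cosmetic: you rule out degeneracy by noting a non-zero polynomial of degree at most $n$ cannot vanish at $n+1$ distinct points, whereas the paper invokes the non-vanishing of a Vandermonde determinant --- equivalent facts, either of which closes the argument.
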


\begin{proof}
For every polynomial $A(x) \in \mathbb{Q}[x]$ of degree at most $n$,
with $A(x) = a_n x^n + a_{n-1} x^{n-1} + \cdots + a_1 x + a_0$,
there is an associated affine function 
$F_A~:~\mathbb{Q}^n \rightarrow \mathbb{Q}$
defined by 
$F_A(z_1, \ldots, z_n) = a_n z_n + a_{n-1} z_{n-1} + \cdots + a_1 z_1 + a_0$. In fact, the mapping $A \mapsto F_A$ is clearly a one-to-one correspondence between the set of all rational polynomials of degree at most $n$ and the set of all rational valued affine functions with domain $\mathbb{Q}^n$.

For $1 \le i \le k$, put $X_i = (x_i, x_i^2,\ldots, x_i^{n-1}, x_i^n)$.
By assumption, $P(x) \in \mathbb{Q}[x]$ has degree at most $n$ and minimises
$\sum_{i=1}^k \left|P(x_i) - y_i\right|_p$ amongst all such polynomials.
Since for every $A(x) \in \mathbb{Q}[x]$ of degree at most $n$, 
$A(x_i) = F_A(X_i)$ for all $i$, and in light of the one-to-one
correspondence $A \mapsto F_A$ noted above, it follows that $F_P$ minimises
$\sum_{i=1}^k \left|F_P(X_i) - y_i\right|_p$ amongst all affine functions from $\mathbb{Q}^n$ to $\mathbb{Q}$. Moreover, we may observe that $y_i \neq y_j \implies X_i \neq X_j$, by our assumption $y_i \neq y_j \implies x_i \neq x_j$ and the construction of the vectors $X_i$.

Therefore, by Theorem 1, there are at least $n+1$ values of $i$
for which $F_P(X_i) = y_i$, or the $X$-data set is degenerate in the sense of Theorem 1. Suppose for the sake of contradiction that the latter is the case. By our assumption that the cardinality of the set $\{x_i\}$ is at least $n+1$, we may renumber the $x_i$ (and $X_i$) values so that the first $n+1$ of them are pairwise distinct. From the degeneracy assumption, it follows that the (possibly smaller) data set $X_1, X_2, \ldots, X_{n+1}$ is also degenerate. Yet consider the matrix $M$ whose column vectors are $(1, X_i)^T$, for $1 \le i \le n+1$: $M$ is a square Vandermonde matrix, of size $n+1$, in $x_1, x_2, \ldots, x_{n+1}$. By a well known classical theorem, the determinant of $M$ is the product of the differences 
$\prod_{1 \le i < j \le n+1}(x_i  - x_j)$. Hence, by the pairwise distinctness of the first $n+1$ $x_i$ values, this determinant is nonzero. This contradicts the degeneracy of $X_1, X_2, \dots, X_{n+1}$.
The desired conclusion has been proved. $\Box$
  
  

\end{proof}

Similar results can be derived for polynomials of multiple variables.

There is a small extension of Corollary \ref{poly-corollary}.
Suppose we have a polynomial $P(x) \in \mathbb{Q}[x]$ of arbitrary degree,
typically greater than $n$,
which we wish to approximate by a rational polynomial of degree at most
$n$.

Suppose further that a ``good'' polynomial approximation is one that minimises the sum of the $p$-adic differences between the two polynomials at a finite number $k$ of rational points, where $k \ge n+1$.
Let $Q(x) \in \mathbb{Q}[x]$ be an optimal approximation for $P(x)$ in this sense. The following result provides a lower bound on the number of distinct zeros of $P(x) - Q(x)$.

\begin{thm}
\label{poly-approxy}
Let $P(x)$ and $Q(x)$ be rational polynomials,
with $Q(x)$ of degree at most $n$. Suppose that $Q(x)$ is
an optimal $p$-adic approximator
for $P(x)$ at a finite set $S$ of rational points with $|S| \ge n+1$.
Then $P(x) - Q(x)$ has at least $n+1$ distinct zeros in $S$.
\end{thm}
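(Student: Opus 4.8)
The plan is to obtain this as an immediate consequence of Corollary \ref{poly-corollary} by manufacturing the right input data. First I would enumerate the finite set as $S = \{x_1, x_2, \ldots, x_k\}$, where $k = |S| \ge n+1$ and the $x_i$ are pairwise distinct, and then define $y_i = P(x_i)$ for each $i$. With this choice we have $|P(x_i) - Q(x_i)|_p = |Q(x_i) - y_i|_p$ for every $i$, so the quantity $\sum_{x \in S}|P(x) - Q(x)|_p$, which $Q$ is assumed to minimise over all rational polynomials of degree at most $n$, is exactly $\sum_{i=1}^k |Q(x_i) - y_i|_p$. Thus $Q$ plays the role of the minimising polynomial (called $P$) in Corollary \ref{poly-corollary}, with our values $y_i = P(x_i)$ playing the role of the $y_i$ there.

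Next I would verify that the hypotheses of Corollary \ref{poly-corollary} hold for this data: we have $k \ge n+1$; the cardinality of $\{x_i \mid 1 \le i \le k\}$ is $k \ge n+1$ since the $x_i$ are pairwise distinct; and the implication $y_i \ne y_j \implies x_i \ne x_j$ holds vacuously, again because the $x_i$ are pairwise distinct. Corollary \ref{poly-corollary} then yields at least $n+1$ values of $i$ for which $Q(x_i) = y_i$, that is, for which $P(x_i) - Q(x_i) = 0$.

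Finally I would translate this back to the statement of the theorem. Each such index $i$ gives a distinct element $x_i \in S$ at which the polynomial $P(x) - Q(x)$ vanishes, so $P(x) - Q(x)$ has at least $n+1$ distinct zeros lying in $S$, which is precisely the claim. (The degenerate case $P(x) = Q(x)$ is trivial, since then every element of $S$ is a zero and $|S| \ge n+1$.)

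I do not expect a real obstacle here; the whole content is the reduction, and the only point requiring care is the bookkeeping needed to confirm that the data constructed from $P$ and $S$ satisfies the hypotheses of Corollary \ref{poly-corollary} — in particular the cardinality condition on $\{x_i\}$ and the implication relating equal $y$-values to equal $x$-values — both of which follow at once from the distinctness of the points of $S$.
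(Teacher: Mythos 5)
Your proposal is correct and follows essentially the same route as the paper: set $y_i = P(x_i)$ for the distinct points $x_i$ of $S$, observe that optimality of $Q$ as an approximator is exactly the minimisation hypothesis of Corollary \ref{poly-corollary}, and read off at least $n+1$ indices with $Q(x_i) = y_i = P(x_i)$. Your explicit verification of the corollary's hypotheses (distinctness giving the cardinality condition and the vacuous implication) is a welcome bit of bookkeeping that the paper leaves implicit.
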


\begin{proof}
Let $k = |S|$, denote the (distinct) elements of $S$ by 
$\{x_1, x_2, \ldots, x_k\}$,
and put $y_i = P(x_i)$, for $1 \le i \le k$.
By assumption, $Q(x)$ is of degree at most $n$ and is an optimal
$p$-adic approximator for $P(x)$ at $S$; that is,
$Q(x)$ minimises $\sum_{i=1}^k |Q(x_i) - y_i|_p$ over all rational
polynomials of degree at most $n$.

By Corollary \ref{poly-corollary}, there are at least $n+1$ points $x_i$
of $S$ such that $Q(x_i) = y_i = P(x_i)$.
The desired conclusion follows immediately. $\Box$


\end{proof}

Let us define a \textit{residual polynomial}\label{polynomial-residual} of $P(x)$ with respect to the prime number $p$, the approximation degree
$n$ and the finite evaluation dataset $S \subset \mathbb{Q}$, with $|S| \ge n+1$, to be a polynomial $P(x) - Q(x)$, where $Q(x)$ is a rational polynomial of degree at most $n$ that minimises the sum of the $p$-adic differences between $P(x)$ and $Q(x)$ at the elements of $S$.

\begin{corollary}
A polynomial $R(x)$ of degree $n+1$
cannot be a residual polynomial of $P(x)$ with respect to
$p$, $n$ and $S$, with $|S| \ge n+1$, if $R(x)$ has a multiple root.
\end{corollary}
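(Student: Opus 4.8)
The plan is to combine Theorem \ref{poly-approxy} with the elementary fact that a nonzero polynomial over a field has at most as many roots, counted with multiplicity, as its degree.

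First I would unwind the definition of a residual polynomial: if $R(x)$ is a residual polynomial of $P(x)$ with respect to $p$, $n$ and $S$, then $R(x) = P(x) - Q(x)$ for some $Q(x) \in \mathbb{Q}[x]$ of degree at most $n$ that minimises $\sum_{x_i \in S} |P(x_i) - Q(x_i)|_p$. Since $\deg R = n+1$ while $\deg Q \le n$, the polynomial $R$ is nonzero (and in fact $\deg P = n+1$ as well). Theorem \ref{poly-approxy} applies to this $P$ and $Q$ --- its hypotheses are precisely $|S| \ge n+1$ and $\deg Q \le n$ --- so $R(x) = P(x) - Q(x)$ has at least $n+1$ distinct zeros lying in $S$.

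Next I would suppose, for contradiction, that $R$ has a multiple root $r$. Then $(x-r)^2 \mid R(x)$, so $r$ occurs with multiplicity at least $2$ among the roots of $R$. But $R$ is nonzero of degree $n+1$ over $\mathbb{Q}$, hence has at most $n+1$ roots counted with multiplicity; the presence of a doubled root therefore forces the number of \emph{distinct} roots to be at most $n$, contradicting the at-least-$n+1$ distinct zeros obtained in the previous step. Hence no residual polynomial of degree $n+1$ can have a multiple root.

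There is essentially no difficult step here. The only point to watch is that the $n+1$ zeros furnished by Theorem \ref{poly-approxy} are genuinely distinct (being distinct elements of $S \subseteq \mathbb{Q}$), so they occupy $n+1$ of the at most $n+1$ available root-slots of $R$ and leave no room for repetition. Put another way, a degree-$(n+1)$ residual polynomial is forced to split over $\mathbb{Q}$ into $n+1$ distinct simple linear factors, a situation incompatible with possessing a multiple root.
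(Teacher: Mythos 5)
Your proposal is correct and follows essentially the same route as the paper: unwind the definition of a residual polynomial, apply Theorem \ref{poly-approxy} to get $n+1$ distinct zeros, and then note that a degree-$(n+1)$ polynomial has no room left for a repeated root (the paper phrases this as a contrapositive rather than a contradiction, but the argument is the same).
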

\begin{proof}
Let $R(x) \in \mathbb{Q}[x]$ have degree $n+1$.
We prove the contrapositive of the stated claim about $R(x)$.
Suppose that $R(x)$ is a residual polynomial of $P(x)$ with respect to $p$,
$n$ and $S$, with $|S| \ge n+1$.
Then $R(x) = P(x) - Q(x)$, for some $Q(x) \in \mathbb{Q}[x]$ of degree
at most $n$ that minimises the sum of the $p$-adic differences between $P(x)$ and $Q(x)$ at the elements of $S$. By the previous theorem,
$R(x)$ has at least $n+1$ distinct roots. Since the degree of $R(x)$ is $n+1$, this accounts for all the roots, each of which must be simple (i.e. non-multiple), by the fundamental theorem of algebra. Hence $R(x)$ has no multiple root. $\Box$
\end{proof}

\begin{corollary}
Suppose that the degree of $P(x)$ is $n+1$.
Then no residual polynomial of $P(x)$ with respect to $p$, $n$ and $S$, with $|S| \ge n+1$, has an irrational root.   
\end{corollary}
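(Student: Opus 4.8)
The plan is to combine the degree bound on $Q(x)$ with Theorem \ref{poly-approxy} and a root count. First I would fix a residual polynomial $R(x) = P(x) - Q(x)$ of $P(x)$ with respect to $p$, $n$ and $S$, where $Q(x) \in \mathbb{Q}[x]$ has degree at most $n$ and minimises the sum of the $p$-adic differences between $P(x)$ and $Q(x)$ at the elements of $S$. The first key observation is that $\deg R(x) = n+1$ \emph{exactly}: since $\deg P(x) = n+1$ and $\deg Q(x) \le n < n+1$, the coefficient of $x^{n+1}$ in $R(x)$ equals the leading coefficient of $P(x)$, which is nonzero, so no cancellation in the top degree is possible.

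Next I would invoke Theorem \ref{poly-approxy} (its hypotheses are met: $Q(x)$ has degree at most $n$, it is an optimal $p$-adic approximator for $P(x)$ at the finite set $S$, and $|S| \ge n+1$), which tells us that $R(x) = P(x) - Q(x)$ has at least $n+1$ distinct zeros lying in $S$. Since $S \subset \mathbb{Q}$, all of these zeros are rational. Then the final step is a counting argument: a nonzero polynomial of degree $n+1$ over a field has at most $n+1$ roots (counted with multiplicity), by the fundamental theorem of algebra, so the $n+1$ distinct rational zeros just produced exhaust all the roots of $R(x)$. Consequently $R(x)$ has no root outside this rational set; in particular it has no irrational root. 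As the chosen residual polynomial was arbitrary, the conclusion follows for every residual polynomial of $P(x)$ with respect to $p$, $n$ and $S$.

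I do not anticipate a serious obstacle here; the argument is a direct corollary of Theorem \ref{poly-approxy} together with the degree observation. The only point requiring slight care is ensuring $\deg R(x)$ is genuinely $n+1$ (not smaller), which is exactly where the hypothesis $\deg P(x) = n+1$ — rather than ``at most $n+1$'' — is used; without it $R(x)$ could have lower degree and the root count would not force rationality. Everything else is routine.
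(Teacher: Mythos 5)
Your proposal is correct and follows essentially the same route as the paper's own proof: note that $\deg R = n+1$ since $\deg P = n+1$ and $\deg Q \le n$, apply Theorem \ref{poly-approxy} to get $n+1$ distinct rational zeros in $S$, and conclude by the root count that these exhaust all roots of $R(x)$. Your added emphasis on why the hypothesis $\deg P(x) = n+1$ (rather than ``at most $n+1$'') is needed is a fair clarification, but the argument itself matches the paper's.
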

\begin{proof}
We prove a statement which is logically equivalent to the stated claim.
Let $R(x)$ be a residual polynomial of $P(x)$ with respect to $p$, $n$ and $S$, with $|S| \ge n+1$. Then $R(x) = P(x) - Q(x)$, for some
$Q(x) \in \mathbb{Q}[x]$ of degree at most $n$ that minimises the
sum of the $p$-adic differences between $P(x)$ and $Q(x)$ at the
elements of $S$. Since the degree of $P(x)$ is $n+1$ and the degree of $Q(x)$ is at most $n$, $R(x)$ has degree $n+1$. Moreover, by the 
previous theorem, $R(x)$ has at least $n+1$ distinct rational roots.
By the fundamental theorem of algebra, this accounts for all the roots,
none of which is irrational. $\Box$
\end{proof}

\section{Implications of the Hyperplane Intersection Theorem for Machine Learning}

An attribute of $p$-adic metrics 
for hierarchical data is that they naturally respect the discrete, branching 
nature of hierarchical relationships. While Euclidean metrics treat 
space as continuous and uniformly connected, $p$-adic metrics capture 
the ``all-or-nothing'' nature of hierarchical relationships --- either two 
points share a common ancestor at a particular level, or they don't.

This suggests that many machine learning problems involving
hierarchical data - from biological classification to natural language
processing to organizational hierarchies --- might be better
approached using $p$-adic metrics rather than traditional Euclidean
approaches. Our applications to linguistic analysis in Section
\ref{applications} demonstrate this advantage empirically, achieving
better results than Euclidean methods do.

Our proof that optimal $p$-adic regression planes must pass 
through data points reflects a deeper truth: in hierarchical data, 
interpolation between points often makes less sense than selecting 
actual observed points as representatives.
\newcommand{\polyaside}[0]{(or low degree polynomials)}
\subsection{Algorithm}

For low dimensional hyperplanes \polyaside{} and small
datasets a brute force algorithm for multivariate $p$-adic linear regression may be practical, in light of Theorem \ref{core-theorem}: try every relevant--sized subset of observed points and use them as representatives.

For example, consider the case $n = 1$. By Theorem \ref{core-theorem}, finding the line that minimises the $p$-adic residual sum
can be done using $O(r^3)$ operations (where $r$ is the number of elements in the
dataset): for every pair of points in the dataset, of which we may form $O(r^2)$ such pairs,
calculate the line between them, and then for every point calculate the residual. Thus, we may obtain the $p$-adic residual sums for the $O(r^2)$ candidate lines using $O(r^3)$ operations in total. The desired minimizing line is then found by a straightforward pass through the candidate lines.

The brute-force algorithm sketched above for the case $n = 1$ rapidly gets impractical in higher dimensions. An $(n+1)$-dimensional dataset of $r$ elements would need
$O(r^{n+2})$ operations --- and the operations themselves involve
finding divisors and remainders of potentially large numbers.

\subsection{Large primes}

There is an optimisation that can be made when $p$ is large,
which relies on Theorem \ref{big-primes}.

\begin{thm}
  \label{big-primes}
  For any finite dataset $D$ with elements in $\mathbb{Q}^n$, there exists a prime $q$ such that for all primes $p \ge q$,
  the $p$-adic residual for a point of an optimal $p$-adic linear
  regression line (or hyperplane) is either 0 or 1.
\end{thm}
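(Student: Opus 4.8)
The plan is to reduce the question to a finite list of candidate hyperplanes that does not depend on $p$, and then to pick $q$ large enough to annihilate every prime occurring in the finitely many rational residuals these candidates produce. Here ``$p$-adic residual'' of a point $X_i$ under an affine function $M$ means $|M(X_i)-y_i|_p$, and throughout I assume $D=\{(X_1,y_1),\dots,(X_k,y_k)\}$, with $X_i\in\mathbb{Q}^n$ and $y_i\in\mathbb{Q}$, satisfies the hypotheses of Theorem \ref{core-theorem} (so $k\ge n+1$, the $X$-data is non-degenerate, and $y_i\ne y_j\Rightarrow X_i\ne X_j$), since otherwise the notion of an optimal regression hyperplane is not covered by our theory.

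First I would build the finite list. For each $(n+1)$-element index set $T\subseteq\{1,\dots,k\}$ for which the augmented vectors $(X_i,1)$, $i\in T$, are linearly independent over $\mathbb{Q}$, Cramer's rule yields a unique affine function $M_T:\mathbb{Q}^n\to\mathbb{Q}$ with $M_T(X_i)=y_i$ for all $i\in T$; its coefficient vector lies in $\mathbb{Q}^{n+1}$, being a ratio of determinants of rational matrices. There are at most $\binom{k}{n+1}$ such $T$, and for each of them the $k$ numbers $M_T(X_j)-y_j$ are fixed rationals. Let $E$ be the (finite) set of nonzero rationals so obtained, and let $q$ be a prime strictly larger than every prime dividing the numerator or denominator of any element of $E$ (take $q=2$ if $E=\varnothing$). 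Then for every prime $p\ge q$ and every $e\in E$ we have $v_p(e)=0$, i.e.\ $|e|_p=1$.

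Next I would show that, for every prime $p$, every optimal $p$-adic regression hyperplane is one of the $M_T$. By Theorem \ref{core-theorem} an optimal affine function $M$ passes through a set $H$ of at least $n+1$ data points. The key sub-claim is that the vectors $(X_i,1)$, $i\in H$, span all of $\mathbb{Q}^{n+1}$: if not, choose a nonzero $V'\in\mathbb{Q}^{n+1}$ orthogonal to all of them; by non-degeneracy of $D$ the affine function $X\mapsto V'\cdot(X,1)$ is not identically zero on the data, so $V'\cdot(X_j,1)\ne 0$ for some $j$, necessarily with $j\notin H$. The swapping construction from the proof of Theorem \ref{core-theorem}---form the quantities $\alpha_j=(V\cdot(X_j,1)-y_j)/(-V'\cdot(X_j,1))$, pick the one of smallest $p$-adic absolute value, and replace $M$'s coefficient vector $V$ by $V+\alpha_j V'$---then produces an affine function of strictly smaller loss, contradicting optimality of $M$. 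Hence $H$ contains an $(n+1)$-subset $T$ with $(X_i,1)_{i\in T}$ linearly independent, and since $M$ interpolates those $n+1$ points it coincides with $M_T$.

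Combining the two steps finishes the proof: fix a prime $p\ge q$ and an optimal hyperplane $M=M_T$; for each $j$ either $M(X_j)=y_j$, so the $p$-adic residual is $0$, or $M(X_j)-y_j\in E$, so the $p$-adic residual is $|M_T(X_j)-y_j|_p=1$ since $p\ge q$. I expect the main obstacle to be the sub-claim that an optimal solution cannot have a ``degenerate hit set'': Theorem \ref{core-theorem} guarantees only $n+1$ hit points, not that they pin down the hyperplane, so one must re-run (a short version of) that theorem's argument to exclude the degenerate case and thereby identify $M$ with a specific $M_T$ whose coefficients are rational and independent of $p$. The rest is bookkeeping about a finite set of rational numbers.
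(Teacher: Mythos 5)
Your proposal is correct, and it is in fact tighter than the paper's own argument, though the overall strategy is the same one the paper intends: collect the finitely many rational residuals that can ever occur, and take $q$ beyond every prime dividing their numerators and denominators. Where you genuinely differ is in how that finite set is justified. The paper's proof speaks of ``the residuals'' of ``a line or hyperplane'' as a fixed finite set of rationals, glossing over the fact that the optimal hyperplane itself depends on $p$; you close exactly this gap by building the $p$-independent candidate family $M_T$ (one interpolant per $(n+1)$-subset with independent augmented vectors) and then proving that every optimal hyperplane coincides with some $M_T$. Your sub-claim that an optimum cannot have a degenerate hit set --- proved by re-running the $\alpha$-swap construction from Theorem \ref{core-theorem} to force a strict loss decrease --- is an ingredient the paper does not address at all, and it is needed to identify the optimum with a member of the finite list. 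Two small points to pin down: in that sub-claim you should take $H$ to be the \emph{full} set of zero-residual points, so that the index $j$ with $V'\cdot(X_j,1)\neq 0$ automatically has a nonzero residual (hence $\alpha_j\neq 0$ and the decrease is strict); and note that you prove the statement under the hypotheses of Theorem \ref{core-theorem}, whereas the paper asserts it for an arbitrary finite dataset and disposes of its degenerate case separately (loosely), so your version is slightly narrower in scope but fully rigorous where it applies.
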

\begin{proof}
  In the degenerate case where all the points in $D$ have one
  coordinate set to the same value (for example, finding the line of best fit
  when all points have the same $x$ value),
  the optimal line or hyperplane will pass through all points and their residuals
  will be 0.

  In the non-degenerate case, a line or hyperplane  will have a finite
  gradient in each coordinate. These gradients will be finite and rational, and therefore
  the residuals will be rational.  There are a finite number of points in
  the dataset, meaning that the residuals form a finite set of finite
  rational numbers.

  Residuals that are zero have a $p$-adic distance of zero.

  Considering the residuals that are non-zero, they define a finite
  set of (integer) numerators and (integer and non-zero) denominators.
  The prime factors of these numerators and their corresponding
  denominators form a finite set, which means that there is a largest
  factor that appears in the set.

  Let the next largest prime be $q$.  Any prime larger greater than or
  equal to $q$ divides no numerator or denominator in the set of
  non-zero residuals. By definition, the $p$-adic distance to any of
  these non-zero residuals is 1.
\end{proof}



For these ``large'' primes (primes $p$ greater than the largest factor in any residual
of the dataset),
the optimal
$p$-adic line or hyperplane will be the one or ones that
pass through the most points.

Point--hyperplane intersection can be calculated
in $O(r^{n+1})$ time by using the
equation of the hyperplane through $n+1$ points as the key
into a hash table. Incidentally, one such calculation is sufficient for all
of these ``large'' primes.



\subsection{Optimisations for polynomial approximation}

We consider a slight variation of 
the polynomial approximation task as per Section
\ref{poly-corollary-section}.
Let $S = \{x_i~|~ 1 \le i \le n+1\}$ be a set of pairwise distinct
rational numbers, and $T = \{y_i~|~1 \le i \le n+1\}$ a corresponding
set of rationals. By polynomial interpolation,
there is a unique rational polynomial $P(x)$ of degree at most $n$
such that $P(x_i) = y_i$, for all $i$.
We call $P(x)$ the associated polynomial {\em interpolant} (for $S, T$).
In one special but not particularly rare case, no search for an
optimal approximation polynomial for $P(x)$ of degree at most $n-1$ needs to be done, since all solutions are equivalent.

\begin{thm}
   Suppose that, for all indices $i, j$, with $i < j$, we have
   $\left| x_i - x_j \right|_p = 1$ (or any other constant),
   and that the associated polynomial interpolant $P(x)$ has degree exactly $n$. 
   Then there are $n+1$ equivalent
    polynomials of degree at most $n-1$ optimally approximating the
   dataset $S, T$ (hence $P(x)$), all of
   which have the same $p$-adic sum of residuals.
\end{thm}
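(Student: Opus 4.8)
The plan is to reduce the claim to Corollary \ref{poly-corollary} together with an elementary factorisation argument. We are approximating $P(x)$ --- equivalently, the dataset $S, T$ --- by a polynomial $Q(x)$ of degree at most $n-1$, minimising $\sum_{i=1}^{n+1}\left|y_i - Q(x_i)\right|_p$. Applying Corollary \ref{poly-corollary} with its ``$n$'' set to $n-1$ and its ``$k$'' set to $n+1$ (its hypotheses hold, since $n+1 \ge n$, the $x_i$ are pairwise distinct, and the implication $y_i \ne y_j \implies x_i \ne x_j$ is then vacuous), any optimal such $Q$ satisfies $Q(x_i) = y_i$ for at least $n$ values of $i$. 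It cannot satisfy this for all $n+1$ values: otherwise $Q$ would agree with $P$ at $n+1$ points and hence equal $P$ by uniqueness of the interpolant of degree at most $n$, contradicting $\deg Q \le n-1 < n = \deg P$. So every optimal approximator agrees with $P$ at exactly $n$ of the points.

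Next I would enumerate the candidates. For each $j \in \{1, \ldots, n+1\}$, let $Q_j(x)$ be the unique polynomial of degree at most $n-1$ interpolating the $n$ points $(x_i, y_i)$ with $i \ne j$. The previous paragraph shows that every optimal approximator is one of $Q_1, \ldots, Q_{n+1}$, and a short argument shows these are pairwise distinct: if $Q_j = Q_{j'}$ with $j \ne j'$, then this common polynomial agrees with $P$ at every index $i$ (each $i$ satisfies $i \ne j$ or $i \ne j'$), forcing it to equal $P$, which is again impossible by the degree comparison. It therefore remains to show that all of $Q_1, \ldots, Q_{n+1}$ have the same $p$-adic sum of residuals; this makes all of them optimal.

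Here is the computation I would carry out. Fix $j$. Since $Q_j$ agrees with $P$ at $x_i$ for every $i \ne j$, the only possibly nonzero residual of $Q_j$ is at $i = j$, so its loss equals $\left|P(x_j) - Q_j(x_j)\right|_p$. The polynomial $P(x) - Q_j(x)$ has degree exactly $n$ --- its leading coefficient equals that of $P$, call it $a_n \ne 0$, because $Q_j$ has strictly smaller degree --- and it vanishes at the $n$ points $x_i$ with $i \ne j$. Hence $P(x) - Q_j(x) = a_n \prod_{i \ne j}(x - x_i)$, so
\[
\left|P(x_j) - Q_j(x_j)\right|_p \;=\; |a_n|_p \prod_{i \ne j} \left|x_j - x_i\right|_p \;=\; |a_n|_p\, c^{\,n},
\]
where $c$ is the common value of $\left|x_i - x_j\right|_p$ (equal to $1$ under the stated hypothesis). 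This is independent of $j$, which finishes the argument.

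There is no genuinely hard step; the point requiring care is the role of the hypothesis $\deg P = n$ (rather than merely $\le n$). It is used twice: to ensure $a_n \ne 0$, and --- more essentially --- to guarantee that the $Q_j$ are actually distinct, so that the conclusion genuinely exhibits $n+1$ optimal polynomials rather than one. Were $\deg P \le n-1$ instead, $P$ itself would be the unique optimal approximator and all the $Q_j$ would coincide with it.
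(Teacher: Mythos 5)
Your proposal is correct and follows essentially the same route as the paper: reduce to Corollary \ref{poly-corollary} (the paper routes this through Theorem \ref{poly-approxy}, which is just that corollary restated) to show an optimal $Q$ must interpolate exactly $n$ of the $n+1$ points, index the candidates $Q_j$ by the omitted point with residual $a_n\prod_{i\neq j}(x-x_i)$, and use the constant-distance hypothesis to equate the losses. The only differences are cosmetic and mildly in your favour: you evaluate each loss in closed form as $|a_n|_p c^n$ instead of the paper's cancellation of a difference of sums, and you make explicit the distinctness of the $Q_j$, which the paper leaves implicit.
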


\begin{proof}
    By Theorem \ref{poly-approxy}, a residual 
    polynomial formed from $P(x) - Q(x)$, where $Q(x)$ is an optimal approximation
    polynomial of $P(x)$ of degree at most $n-1$,  has at least $n$
    distinct zeros in $S$. Since this residual has degree exactly $n$,
    by our assumption on the degree of $Q(x)$,
    this residual has exactly $n$ distinct roots in $S$, by the fundamental theorem of algebra. Thus, since $|S| = n+1$, we may associate with $Q(x)$ the unique
    element, say $x_j$, of $S$ for which the residual polynomial does
    not vanish. As a partial converse, 
    if we are given a subset of $S$ of cardinality $n$,
    then there is a polynomial $R(x)$ of degree $n$
    having the elements of this subset as its roots
    and having the same leading coefficient as $P(x)$. We may therefore put
    $Q(x) = P(x) - R(x)$, and observe that $Q(x)$ has degree at most $n-1$.
    In other words, since $S$ consists of $n+1$ points, 
    we can index each potential optimal residual polynomial
    by the point of $S$ at which it is non-zero, and use that
    to index the associated potential optimal approximating polynomial.

    More explicitly, define $R_j(x)$ to be the potential
    optimal residual polynomial 
    which is non-zero at $x_j$ and has the same leading coefficient,
    $a$ say, as $P(x)$:
\[
    R_j(x) = a\prod_{i=1,i \neq j}^{n+1} (x - x_i).
\]

    We then define $Q_j(x) = P(x) - R_j(x)$ as the 
    potential optimal approximating polynomial of degree at most $n-1$ that yields $R_j(x)$ as its residual.
    In summary, we have shown that there are {\em at most} $n+1$ optimal approximating polynomials for $P(x)$ at $S$,
    each of which corresponds to one of the potential
    optimal residual polynomials defined above. 
    We shall show that there are, in fact, {\em exactly} $n+1$ optimal
    approximating polynomials, all of which have the same $p$-adic sum of residuals.

    Observe that $\sum_{c=1}^{n+1} \left| R_j(x_c)\right|_p$ is the $p$-adic sum of residuals for the potential optimal approximating polynomial $Q_j(x)$ for $P(x)$ at $S$.

    Let us consider the difference between the $p$-adic sum of residuals
    for any two such polynomials at $S$.

    Take two indexes $j, k$ and observe that
\[    
 \sum_{c=1}^{n+1} (\left|
    R_j(x_c) \right|_p - \left| R_k(x_c)
    \right|_p)
    = |a|_p \sum_{c=1}^{n+1} \left( \left| \prod_{i=1,i \neq j}^{n+1}  
      (x_c - x_i) \right|_p -  \left| \prod_{i=1,i \neq k}^{n+1} (x_c - x_i) \right|_p
    \right)
\]

 When $c \neq j$ and $c \neq k$, there will be a zero term in one of entries in each product, making it zero. So
  the sum reduces to just the $c = j,k$ terms:

  \begin{equation*}
    \begin{split}
 \sum_{c=1}^{n+1} (\left|
    R_j(x_c) \right|_p - \left| R_k(x_c)
    \right|_p)
    = &
    |a|_p \left| \prod_{i=1,i \ne j}^{n+1} (x_i - x_j) \right|_p +
    |a|_p \left| \prod_{i=1,i \ne j}^{n+1}  (x_i - x_k) \right|_p \\
    & - |a|_p \left| \prod_{i=1,i \ne k}^{n+1} (x_i - x_j) \right|_p -
    |a|_p \left| \prod_{i=1,i \ne k}^{n+1} (x_i - x_k) \right|_p
    \end{split}
\end{equation*}

In the second term, when $i=k$, the
product is zero. Likewise, in the third term when $i=j$. So that reduces to:

\[
 \sum_{c=1}^{n+1} (\left|
    R_j(x_c) \right|_p - \left| R_k(x_c)
    \right|_p) =
    |a|_p \left| \prod_{i=1,i \ne j}^{n+1} (x_i - x_j) \right|_p  -
  |a|_p \left| \prod_{i=1,i \ne k}^{n+1} (x_i - x_k) \right|_p
\]

   Using the widespreadness property ( $\forall i,j \left| x_i - x_j \right|_p = 1 $ ), this becomes:

\[   
 \sum_{c=1}^{n+1} (\left|
    R_j(x_c) \right|_p - \left| R_k(x_c)
    \right|_p)     
    =
   (|a|_p \prod_{i=i,i \ne j}^{n+1} 1) - (|a|_p \prod_{i=i,i \ne k}^{n+1} 1) \\
    = |a|_p  - |a|_p 
    = 0 
\]

This demonstrates the remaining parts of the theorem's statement, namely, that the potential optimal approximating polynomials of degree at most
$n-1$ for $P(x)$ at $S$
all have the same $p$-adic sum of residuals, and hence that there are
{\em exactly} $n+1$ optimal approximating polynomials of degree
at most $n-1$ for $P(x)$ at $S$.
\end{proof}

\section{Applications}\label{applications}

To the best of the authors' knowledge, no applications for
$p$-adic linear regression have been found other than
the ones in this section. 

We expect that non-linear machine learning techniques
will enable many more applications beyond the two
outlined here.

\subsection{A slightly-contrived multivariate example}

The first application makes use of the hierarchial structure of the
WordNet \mycite{wordnet} ontology. We can use this to give unique
$p$-adic values to word senses. This lets us find correlations between
collections of objects even in the presence of some randomness by
creating a multi-variate $p$-adic linear regression problem, solving
it and using the coefficients of the linear model to gain insight into
the relations of the objects.

We can express this in the following problem statement.

Zorgette the alien has come to Earth, and instructed her
robots to collect three examples of different kinds of trees
on a sequence of missions.

Unfortunately, one of her three
robots is faulty --- she does not know which one --- and it collects random objects.

The two robots which are working should be highly correlated in what they collect
on each mission, and the third (the faulty one) highly uncorrelated.

\subsubsection{Turning a Zorgette problem into a linear regression problem}\label{zorgette-to-linear}

Zorgette's problem involves trees --- both mathematically and physically.
WordNet is a large lexical database of English that organises words into sets of synonyms called synsets and encodes various semantic relations between them in the form of a directed graph. A very small amount of edge pruning turns it into
a tree.
A portion of the WordNet 3.1
hierarchy is shown in Figure \ref{wordnetpicture}.

The path to the noun {\sf mammoth.n.01} is {\sf
1.2.3.37.5.4.4.5.3.8.4.17.1.4
}, which can be encoded as 

$1 + 2p + 3p^2 + 37p^3 + 5p^4 + 4p^5 + 4p^6 + 5p^7 + 3p^8 + 4p^9 + 17p^{10} + p^{11} + 4p^{12}$

This encoding has the neat property that the similarity of two nodes (how deep their
closest common ancestor is) can be calculated using their $p$-adic distance. Two nodes are similar
if they are $p$-adically close.

Thus Zorgette wants to set up this $p$-adic linear equation:

\begin{equation*}
a X = b Y + c Z + d    
\end{equation*}

Where $X$, $Y$ and $Z$ are column vectors, with a row for each mission. Each
element is the $p$-adic
WordNet number for the object that the
robot returned
on a given mission. Robot 1's objects are encoded in the $X$ vector,
robot 2's objects as $Y$ and robot 3's objects as $Z$.

Zorgette wants to learn the optimal values of $a$, $b$, $c$ and $d$,
that would minimise the $p$-adic error of that equation on her data
set. The $p$-adic error
corresponds to the semantic similarity of the object that Zorgette's
linear regression predicts versus the actual object, i.e. her
linear regression model should try to predict an object which is
as similar as possible to what robot 1 returned with, based on what
robot 2 or robot 3 brought back.

If robot 2 is faulty,
the objects it will have collected will be
random noise that aren't related to robot 1's or robot
3's souvenirs, so $b$ will be 0. Conversely, if robot 3 is faulty, then $c$ will be
0. If robot 1 is faulty, then both $b$ and $c$ will be 0.

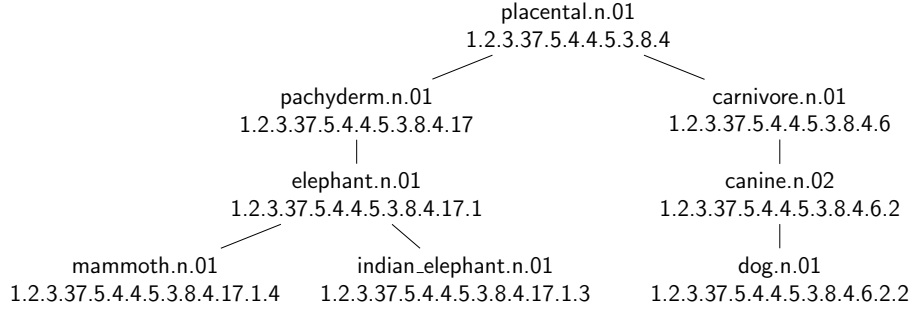
\begin{figure}
  {\sf
    \scalebox{0.75}{
\begin{tikzpicture}
 \node[align=center,xshift=-20mm] {placental.n.01\\1.2.3.37.5.4.4.5.3.8.4}
        child {
            node[xshift=-30mm,align=center] {pachyderm.n.01\\
            1.2.3.37.5.4.4.5.3.8.4.17}
            child {
             node[align=center]{elephant.n.01\\1.2.3.37.5.4.4.5.3.8.4.17.1}
             child {
               node[xshift=-30mm,align=center] {mammoth.n.01\\1.2.3.37.5.4.4.5.3.8.4.17.1.4}
             }
             child {
               node[align=center,xshift=10mm] {indian\_elephant.n.01\\1.2.3.37.5.4.4.5.3.8.4.17.1.3}
             }
             }
        }
        child {
            node[xshift=30mm,align=center] {carnivore.n.01\\1.2.3.37.5.4.4.5.3.8.4.6}
            child {
                node[align=center] {canine.n.02\\1.2.3.37.5.4.4.5.3.8.4.6.2}
                child {
                   node[align=center] {dog.n.01\\1.2.3.37.5.4.4.5.3.8.4.6.2.2}
                }
            }
        };
      \end{tikzpicture}
      }
}
\caption{A portion of the WordNet hierarchy, with a sample encoding for $p>402$; $p$ must exceed the largest child index in the pruned tree, so we take $p=409$}
\label{wordnetpicture}
\end{figure}


\subsubsection{Zorgette's results}

Code for the Zorgette scenario is in \url{github.com/solresol/padicwordnet}.
It includes a randomly-generated set of missions.
The results are in Table \ref{zorgette-catalog}.
The objects are categorized using WordNet 3.1's taxonomy, and encoded
using the smallest prime that can be safely used (409) without causing
clashes.

\begin{table}[t]
  \centering
  \tiny
    \begin{tabular}{p{18mm}p{43mm}p{43mm}p{43mm}}
\toprule
Zorgette's request & Robot 1's loot & Robot 2's loot & Robot 3's loot \\
\midrule
chestnut.n.02 & {\bf japanese\allowbreak\_chestnut.n.01} \newline 273116748704467022682724613459 & {\bf ozark\allowbreak\_chinkapin.n.01} \newline 326691034247600388922020237468 & {\bf strickle.n.02} \newline 45991216075942090948 \\
hornbeam.n.01 & {\bf european\allowbreak\_hornbeam.n.01} \newline 117240465583858939981595536269 & {\bf american\allowbreak\_hornbeam.n.01} \newline 63666180040725573742299912260 & {\bf cleric.n.01} \newline 655934845482986543017862842 \\
hop\allowbreak\_hornbeam.n.01 & {\bf old\allowbreak\_world\allowbreak\_hop\allowbreak\_hornbeam.n.01} \newline 117109477110648344954115595868 & {\bf eastern\allowbreak\_hop\allowbreak\_hornbeam.n.01} \newline 63535191567514978714819971859 & {\bf switchboard.n.01} \newline 1573780139196323304716 \\
beech.n.01 & {\bf american\allowbreak\_beech.n.01} \newline 55675883174879277066023547799 & {\bf copper\allowbreak\_beech.n.01} \newline 162824454261146009544614795817 & {\bf nun's\allowbreak\_habit.n.01} \newline 396171205890659683677595416 \\
necklace\allowbreak\_tree.n.01 & {\bf bead\allowbreak\_tree.n.01} \newline 68643742022728184786537647498 & {\bf jumby\allowbreak\_bead.n.01} \newline 122218027565861551025833271507 & {\bf white\allowbreak\_slave.n.01} \newline 800684989475070496403917474 \\
hackberry.n.01 & {\bf european\allowbreak\_hackberry.n.01} \newline 116847500164227154899155715066 & {\bf american\allowbreak\_hackberry.n.01} \newline 63273214621093788659860091057 & {\bf venetian\allowbreak\_glass.n.01} \newline 1285764896971742062431186 \\
locust\allowbreak\_tree.n.01 & {\bf clammy\allowbreak\_locust.n.01} \newline 120646165887334410696073986695 & {\bf honey\allowbreak\_locust.n.01} \newline 227794736973601143174665234713 & {\bf range.n.02} \newline 5762476220082796694 \\
angiospermous\allowbreak\_tree.n.01 & {\bf bush\allowbreak\_willow.n.02} \newline 375942700174784119254477828244 & {\bf terebinth.n.01} \newline 2840359835158918966262076532658 & {\bf standard\allowbreak\_cell.n.01} \newline 394573092415095127486114211 \\
bonsai.n.01 & {\bf ming\allowbreak\_tree.n.02} \newline 110167088030486808497678754615 & {\bf ming\allowbreak\_tree.n.01} \newline 56592802487353442258383130606 & {\bf vegetable.n.02} \newline 106017242436927074913158021 \\
incense\allowbreak\_tree.n.01 & {\bf gumbo-limbo.n.01} \newline 224912990562968052570106545891 & {\bf elephant\allowbreak\_tree.n.01} \newline 171338705019834686330810921882 & {\bf fumigator.n.02} \newline 99579452998956312316 \\
\bottomrule
\end{tabular}

    \caption{What Zorgette's Robots Fetched, WordNet 3.1, $p=409$}
    \label{zorgette-catalog}
\end{table}

\newcommand{\zorgetteints}[0]{x = y + 53574285543133366239295624009}

Running the regression produces:

\[
\zorgetteints{}
\]

Note that $53574285543133366239295624009 = 409^{11}$, which is a very
small number 409-adically, since it is so highly divisible by 409.
The
variable $x$ (what robot 1 collected) is clearly closely related to
variable $y$ (what robot 2 collected), and completely unrelated to
the variable $z$ (what robot 3 collected). From this Zorgette can
(correctly) observe that robot 3 is faulty.

If Zorgette had taken the integers from Table \ref{zorgette-catalog}
and tried to use ordinary least squares to predict
the optimal coefficients, she would have found:

\newcommand{\zorgetteols}[0]{x = 0.0998903983521872 y - 112.482267940678 z + 1.43578101728206 \cdot 10^{29}}

\[
  \zorgetteols
\]

She would then (incorrectly) assume that robot 2 was faulty.


\subsection{Indo-European Grammar as a Linear Regression Problem}

This subsection is a review of \mycite{aaclpadiclinear}, which is the
only application of $p$-adic linear regression we were able to
find in our literature review. They observe that it is possible to
model the pluralisation of nouns as a machine learning problem: given
a corpus of singular forms and plural forms, the task is to find a linear function
that can form a plural from a previously-unknown singular.

They found that when samples of nouns that are $2$-adically close are
used to train a regressor that tries to predict pluralisation, the
regression often matches the grammar rules for that language. They
reported a Bonferroni-adjusted probability of $3.13 \times 10^{-160}$
in their experiment comparing $p$-adic linear regression with Euclidean
methods across 1500 different human languages.

The ability to analyse grammar rules at scale like this also
turned up the previously overlooked strange pluralisation rules of the
Dobu language --- an Austronesian language (Oceanic, Papuan Tip subgroup) that is known to have been isolated
from Indo-European influence for thousands of years. The strangeness is that despite that isolation, Dobu speakers
pluralise by suffixing in ways that look Indo-European. No explanation for this phenomenon has yet been identified.

\section{Open problems}

Given a small value of $p$, is there any faster algorithm than brute-force searching
through all possible hyperplanes?

Quantum algorithms for finding a minimum in a general dataset (whether
 computed on-the-fly or dynamically) are known \mycite{baritompa}.  That
 algorithm cannot quite achieve a $N / \log N$ speed improvement for
finding the minimum  (where $N$ is the number of possible
values to search through --- $N = r^n$ in this case) because as there are
 fewer and fewer values below the threshold level at each iteration,
 and Grover's algorithm \mycite{grover1996} needs to do more work at
 each level.
 Can the distribution of
 $p$-adic residuals (which has regular periodic local minima) be  
exploited to give better speed improvements still?


It is common in machine learning problems to add
regularising terms to the loss function.  What are
the appropriate regularisation terms to use? When is regularisation
helpful?  How can we solve a regularised $p$-adic linear regression
problem?

Theorem \ref{core-theorem} on Page \pageref{core-theorem} puts an upper
bound on the number of equally good lines of best fit. If
$D = \{(X,y) | X \in \mathbb{R}^n, y \in \mathbb{R} \}$,
then the maximum number of lines of best fit is less than
or equal to
\[
\binom{d}{n+1} = \frac{d!}{(n+1)!(d-n-1)!}
\]
 where $d$ is the cardinality of $D$.
Is this the tightest
upper bound
possible?

Is there any upper bound on the number
of lines of best fit for a given value of $p$?

Is Theorem \ref{poly-approxy} also
true if the approximation is measured at an infinite
number of points?

Is it possible for a polynomial $P(x)$ to have multiple residual
polynomials (as defined in Section \ref{polynomial-residual} on page \pageref{polynomial-residual} with
respect to the same prime and dataset? It seems likely, given that in
simple $p$-adic linear regression multiple equally-good lines of best
fit are possible.

Is it possible for one polynomial to be the residual polynomial for
multiple higher degree polynomials?  This also seems likely. What is
the maximum number of distinct polynomials one polynomial can be a
residual for?

Having rational roots with no duplication is a necessary condition
to be a polynomial residual. Is it a sufficient condition?

\section{Conclusion}

While $p$-adic metrics have been largely overlooked in machine learning, 
our results suggest they may provide valuable insights about properly handling 
hierarchical data. The success of $p$-adic regression in linguistic 
analysis, combined with our theoretical understanding of why it works, 
points to a broader principle: the metric space we choose should match 
the inherent structure of our data.

This opens up new research directions for machine learning on hierarchical 
data structures, from improved algorithms for taxonomic classification 
to better methods for analyzing organizational hierarchies. Future work 
might explore how other machine learning techniques could be reformulated 
in $p$-adic space to better handle hierarchical data.

\section{Acknowledgements}

The authors would especially like to thank Igor Shparlinski
without whom this paper would never have been written,
Micka\"el Montessinos for his corrections and generalisations
and the very insightful comments of the anonymous reviewer.

\printbibliography

\end{document}